
\NeedsTeXFormat{LaTeX2e}

\documentclass[draft]{new_tlp}
\usepackage{mathptmx}

\usepackage[obeyDraft]{todonotes}\presetkeys{todonotes}{inline}{}
\usepackage{pgfplots}\usetikzlibrary{plotmarks}\pgfplotsset{compat=1.14}

\usepackage[ruled,linesnumbered,vlined]{algorithm2e}

\usepackage{amssymb}
\usepackage{booktabs}
\usepackage{comment}
\usepackage{enumitem}
\usepackage{multirow}
\usepackage{url}
\usepackage{thm-restate}

  \title[Theory and Practice of Logic Programming]
        {Model enumeration in propositional circumscription via unsatisfiable core analysis}

  \author[M. Alviano]
         {MARIO ALVIANO\\
         Department of Mathematics and Computer Science, University of Calabria, Italy\\
         \email{alviano@mat.unical.it}}

\jdate{March 2003}
\pubyear{2003}
\pagerange{\pageref{firstpage}--\pageref{lastpage}}
\doi{S1471068401001193}

\newtheorem{example}{Example}

\begin{document}

\label{firstpage}

\maketitle

  \begin{abstract}
    Many practical problems are characterized by a preference relation over admissible solutions, where preferred solutions are minimal in some sense.
    For example, a preferred diagnosis usually comprises a minimal set of reasons that is sufficient to cause the observed anomaly.
    Alternatively, a minimal correction subset comprises a minimal set of reasons whose deletion is sufficient to eliminate the observed anomaly.
    Circumscription formalizes such preference relations by associating propositional theories with minimal models.
    The resulting enumeration problem is addressed here by means of a new algorithm taking advantage of unsatisfiable core analysis.
    Empirical evidence of the efficiency of the algorithm is given by comparing the performance of the resulting solver, \textsc{circumscriptino}, with \textsc{hclasp}, \textsc{camus\_mcs}, \textsc{lbx} and \textsc{mcsls} on the enumeration of minimal models for problems originating from practical applications.
  \end{abstract}

  \begin{keywords}
    circumscription;
    minimal model enumeration;
    minimal correction subsets;
    minimal intervention strategies;
    unsatisfiable core analysis.
  \end{keywords}


\section{Introduction}

Circumscription \cite{DBLP:journals/ai/McCarthy80} is a nonmonotonic logic formalizing common sense reasoning by means of a second order semantics.
Intuitively, circumscription allows to express that some things are as expected unless otherwise specified, a property that cannot be expressed in monotonic languages such as first order logic.
More specifically, the idea of circumscription is to minimize the extension of some predicates.
In the special case of propositional theories, which are the focus of the present paper, the simplest form of circumscription essentially selects subset minimal models.
In the form introduced by \citeANP{DBLP:journals/ai/Lifschitz86}~\citeyear{DBLP:journals/ai/Lifschitz86}, instead, some atoms are used to group interpretations, and other atoms are subject to minimization.

Many practical problems are characterized by a preference relation over admissible solutions.
For example, when analyzing a faulty system, several diagnoses are usually possible, and the debugging process can be improved by focusing on those diagnoses comprising a minimal set of reasons that is sufficient to cause the observed anomaly \cite{DBLP:conf/aadebug/PereiraDA93,DBLP:journals/jair/JannachSS16}.
In this case the preference relation is given by the subset containment relation.
The same preference relation can be used to recover the faulty system:
a correction subset is a set of reasons whose deletion is sufficient to eliminate the observed anomaly, and intuitively the debugging process has to focus on \emph{minimal correction subsets} \cite{DBLP:conf/aaai/Junker04,DBLP:conf/ijcai/Marques-SilvaHJPB13}.
It turns out that such practical problems have a natural representation in the framework of circumscription.

The notion of a minimal correction subset received particular attention from the scientific community in recent years, in particular in the context of propositional logic \cite{DBLP:conf/sat/Marques-SilvaP14,DBLP:conf/ijcai/MenciaPM15,DBLP:conf/sat/MenciaIPM16}.
In this case, a minimal correction subset of an unsatisfiable propositional theory $T$ is a subset $S$ of $T$ such that $T \setminus S$ is satisfiable, while $T \setminus S'$ is still unsatisfiable for all $S' \subset S$.
A natural representation of this problem in circumscription is obtained by replacing each formula $\phi \in T$ with $\phi \vee x_\phi$, where $x_\phi$ is a fresh variable subject to minimization.
Models of the obtained circumscribed theory represent minimal correction subsets of $T$:
in fact, the minimal correction subset associated with a model $I$ is $\{\phi \in T \mid x_\phi \in I\}$.

It is therefore not a surprise that the enumeration of models satisfying some minimality condition has been already addressed in the literature \cite{DBLP:journals/tplp/KaminskiSSV13,DBLP:journals/jar/LiffitonS08,DBLP:conf/ecai/FaberVCG16}.
In particular, \citeANP{DBLP:journals/tplp/KaminskiSSV13} focused on the computation of minimal intervention strategies in logical signaling networks representing biological scenarios;
among the several techniques they presented, the most efficient takes advantage of the domain heuristics supported by the solver \textsc{hclasp}.
\citeANP{DBLP:journals/jar/LiffitonS08} instead focused on the enumeration of minimal correction subsets in order to subsequently compute minimal unsatisfiable subsets (or minimal unsatisfiable cores);
the first computational task is accomplished by the solver \textsc{camus\_mcs} by enumerating models of size $n$ that do not contain previously reported models, for increasing values of $n$.
Finally, \citeANP{DBLP:conf/ecai/FaberVCG16} presented a general purpose algorithm to achieve minimal model enumeration by iteratively enumerating cardinality minimal models of the input theory that do not contain previously reported models;
enumeration of cardinality minimal models is usually achieved by computing a first model of minimal cardinality, and then by enumerating all models of that size.

Cardinality minimal models of propositional theories are solutions of the computational problem known as MaxSAT.
For problems originating from practical applications, the most efficient MaxSAT algorithms are based on unsatisfiable core analysis \cite{DBLP:journals/constraints/MorgadoHLPM13};
among them are \textsc{oll} \cite{DBLP:conf/iclp/AndresKMS12} and \textsc{one} \cite{DBLP:conf/ijcai/AlvianoDR15}.
In particular, \textsc{one} can be seen as a simplification of \textsc{oll}, and essentially modifies the processed theory by enforcing the satisfaction of all but one formulas in the analyzed unsatisfiable core.
A natural question is therefore whether these algorithms can be directly adapted to achieve minimal model enumeration.
In particular, such an algorithm would not rely on any global condition on the size of the computed models, which is an advantage because global constraints of this kind may exponentially deteriorate the performance of a solver \cite{DBLP:conf/sat/BacchusN14}.

This paper provides a positive answer to the above question:
an algorithm for the enumeration of models of a circumscribed theory is presented;
the algorithm takes advantage of the unsatisfiable core analysis provided by \textsc{one}, and enumerates models sorted by size.
This is a property shared with the approaches proposed by \citeANP{DBLP:journals/jar/LiffitonS08}, and \citeANP{DBLP:conf/ecai/FaberVCG16}. 
However, differently from them, and like the strategy adopted by \citeANP{DBLP:journals/tplp/KaminskiSSV13}, the new algorithm smoothly runs on an incremental solver, meaning that new formulas are introduced during its execution, but none of them need to be subsequently removed.

A prototype solver implementing the proposed algorithm is also presented.
It is called \textsc{circumscriptino}, and relies on the incremental SAT solver \textsc{glucose} \cite{DBLP:conf/ijcai/AudemardS09} for computing models and unsatisfiable cores of the processed propositional theory.
The prototype is evaluated empirically on four testcases representing the logical signaling networks analyzed by \citeANP{DBLP:journals/tplp/KaminskiSSV13}:
in order to enumerate minimal intervention strategies with \textsc{circumscriptino}, the instances processed by \textsc{hclasp} are translated into classical propositional theories by means of the convenient tool \textsc{lp2sat} \cite{DBLP:conf/birthday/JanhunenN11}.
Three of the tested instances are solved by \textsc{hclasp} and \textsc{circumscriptino} in a few seconds.
Many other testcases are obtained from the SAT Solver Competitions \cite{DBLP:journals/aim/JarvisaloBRS12}.
In particular, the prototype is evaluated on the enumeration of minimal correction subsets for unsatisfiable instances of the MUS Special track.
On these instances, the performance of \textsc{circumscriptino} is superior than those of the specialized solvers \textsc{camus\_mcs} \cite{DBLP:journals/jar/LiffitonS08}, \textsc{lbx} \cite{DBLP:conf/ijcai/MenciaPM15} and \textsc{mcsls} \cite{DBLP:conf/ijcai/Marques-SilvaHJPB13}, as well as of \textsc{hclasp} \cite{DBLP:conf/aaai/GebserKROSW13}.

\section{Background}

Let $\mathcal{A}$ be a fixed, countable set of \emph{atoms} including $\bot$.
A \emph{literal} is an atom possibly preceded by the connective $\neg$.
For a literal $\ell$, let $\overline{\ell}$ denote its \emph{complementary literal}, that is, $\overline{p} = \neg p$ and $\overline{\neg p} = p$ for all $p \in \mathcal{A}$;
for a set $L$ of literals, let $\overline{L}$ be $\{\overline{\ell} \mid \ell \in L\}$.
Moreover, for a set $L$ of literals and a set $A$ of atoms, the \emph{restriction} of $L$ to symbols in $A$ is $L|_A := L \cap (A \cup \overline{A})$.

\emph{Formulas} are defined as usual by combining atoms and the connectives $\neg$, $\wedge$, $\vee$, $\rightarrow$.
In addition, a formula can be a \emph{cardinality constraint} of the following form:
\begin{equation}\label{eq:cc}
    \ell_1 + \cdots + \ell_n \geq k
\end{equation}
where $\ell_1,\ldots,\ell_n$ are literals, $n \geq 1$ and $k \geq 0$.
A \emph{theory} is a set $T$ of formulas including $\neg \bot$;
the set of atoms occurring in $T$ is denoted by $\mathit{atoms}(T)$.

\begin{example}\label{ex:run:1}
The following theories will be used as running examples:
\[
    \begin{array}{llll}
        T_1 := \{\neg \bot,\quad a \vee x_0,\quad \neg a \vee b \vee x_1,\quad \neg a \vee \neg b \vee x_2\}; \\
        T_2 := T_1 \cup \{r \rightarrow x_0 \wedge x_1 \wedge x_2\}; \\
        T_3 := T_1 \cup \{\neg x_0 + \neg x_1 + \neg x_2 + y_1 + y_2 \geq 2,\quad y_2 \rightarrow y_1\}.
    \end{array}
\]
In particular, $T_3$ will be used in Section~\ref{sec:enum} as an example of unsatisfiable core analysis.
\hfill$\blacksquare$
\end{example}

An \emph{assignment} is a set $A$ of literals such that $A \cap \overline{A} = \emptyset$.
An \emph{interpretation} for a theory $T$ is an assignment $I$ such that $(I \cup \overline{I}) \cap \mathcal{A} = \mathit{atoms}(T)$.
Relation $\models$ is defined as usual:
for $p \in \mathcal{A}$, $I \models p$ if $p \in I$;
for $\phi$ and $\psi$ formulas, $I \models \neg \phi$ if $I \not\models \phi$,
$I \models \phi \wedge \psi$ if $I \models \phi$ and $I \models \psi$,
$I \models \phi \vee \psi$ if $I \models \phi$ or $I \models \psi$, and
$I \models \phi \rightarrow \psi$ if $I \models \psi$ whenever $I \models \phi$;
for $\phi$ of the form (\ref{eq:cc}), $I \models \phi$ if $|I \cap \{\ell_1,\ldots,\ell_n\}| \geq k$;
for a theory $T$, $I \models T$ if $I \models \phi$ for all $\phi \in T$.
$I$ is a \emph{model} of a theory $T$ if $I \models T$.
Let $\mathit{models}(T)$ denote the set of models of $T$.
(Models will be also represented by the set of their atoms, as their negative literals are implicit.)

\begin{example}[Continuing Example~\ref{ex:run:1}]\label{ex:run:2}
$T_1$ has 16 models, including the following:
$\{x_0\}$,
$\{x_0, b\}$, 
$\{x_1, a\}$, 
$\{x_2, a, b\}$, 
$\{x_0, x_1\}$,
$\{x_0, x_1,$ $a\}$, and
$\{x_0, x_1, b\}$.
These are also models of $T_2$ (where $r$ is false);
$T_2$ additionally admits, for any $X \subseteq \{a,b\}$,
$\{x_0, x_1, x_2, r\} \cup X$.
Regarding $T_3$, variables $y_1,y_2$ can be used to constrain the number of false atoms among $\{x_0,x_1,x_2\}$.
For example, models extending the assignment $\{\neg y_1, \neg y_2\}$ must assign false to at least two atoms in $\{x_0,x_1,x_2\}$;
these models are precisely
$\{x_0\}$,
$\{x_0, b\}$, 
$\{x_1, a\}$, and
$\{x_2, a, b\}$. 
Finally, note that $T_1$, $T_2$ and $T_3$ have no models extending the assignment $\{\neg x_0, \neg x_1, \neg x_2\}$. 
\hfill$\blacksquare$
\end{example}

\emph{Circumscription} applies to a theory $T$ and sets $P,Z$ of atoms;
atoms in $P$ are subject to minimization, while atoms in $Z$ are irrelevant.
Formally, relation $\leq^{PZ}$ is defined as follows:
for $I,J$ interpretations of $T$, $I \leq^{PZ} J$ if both $I|_{\mathcal{A} \setminus (P \cup Z)} = J|_{\mathcal{A} \setminus (P \cup Z)}$ and $I \cap P \subseteq J \cap P$.
$I \in \mathit{models}(T)$ is a \emph{preferred model} of $T$ with respect to $\leq^{PZ}$ if there is no $J \in \mathit{models}(T)$ such that $I \not\leq^{PZ} J$ and $J \leq^{PZ} I$.
Let $\mathit{CIRC}(T,P,Z)$ denote the set of preferred models of $T$ with respect to $\leq^{PZ}$.

\begin{example}[Continuing Example~\ref{ex:run:2}]\label{ex:run:3}
Let $P$ be $\{x_0,x_1,x_2\}$, and $Z \subseteq \{a,b\}$.
$\mathit{CIRC}(T_1,P,Z)$ and $\mathit{CIRC}(T_2,P,Z \cup \{r\})$ are
$\{\{x_0\}$,
$\{x_0, b\}$, 
$\{x_1, a\}$, 
$\{x_2, a, b\}\}$,
while $\mathit{CIRC}(T_2,P,Z)$ additionally includes
$\{x_0,x_1,x_2,r\} \cup X$, for all $X \subseteq \{a,b\}$.
Indeed, if $r$ is not irrelevant, then $I \not\leq^{PZ} \{x_0,x_1,x_2,r\} \cup X$ for all $I \in \{\{x_0\}$,
$\{x_0, b\}$, 
$\{x_1, a\}$, 
$\{x_2, a, b\}\}$, and all $X \subseteq \{a,b\}$.
Regarding $T_3$, note that $\mathit{CIRC}(T_1,\{y_1,y_2\},Z)$ is again
$\{\{x_0\}$,
$\{x_0, b\}$, 
$\{x_1, a\}$, 
$\{x_2, a, b\}\}$,
for any $Z \subseteq \{a,b\}$.
\hfill$\blacksquare$
\end{example}

\section{Model enumeration}\label{sec:enum}

The computational problem addressed in this paper is the enumeration of models of a circumscribed theory, that is, the enumeration of $\mathit{CIRC}(T,P,Z)$.
The proposed algorithm takes advantage of modern solvers for checking satisfiability of propositional theories.
This is achieved by means of function $\mathit{solve}$, whose input is a theory $T$ and a set $A$ of literals called \emph{assumptions}.
The function searches for a model $I$ of $T$ such that $A \subseteq I$.
If such an $I$ exists, tuple $(\mathit{true}, I, B, -)$ is returned, where $B \subseteq I$ is the set of branching literals used to compute $I$.
Otherwise, a tuple $(\mathit{false}, -, -, C)$ is returned, where $C \subseteq A$ is such that $T \cup \{\ell \mid \ell \in C\}$ has no models;
in this case, $C$ is called \emph{unsatisfiable core}.

\begin{example}[Continuing Example~\ref{ex:run:3}]\label{ex:run:4}
The result of $\mathit{solve}(T_1,\{\neg x_0, \neg x_1, \neg x_2\})$ is $(\mathit{false}, -, -, \{\neg x_0, \neg x_1, \neg x_2\})$, that is, set $\{\neg x_0, \neg x_1,$ $\neg x_2\}$ is an unsatisfiable core.
On the other hand, $\mathit{solve}(T_3,\{\neg y_1, \neg y_2\})$ returns $(\mathit{true}, I, B, -)$, where $I \in \{\{x_0\}$,
$\{x_0, b\}$, 
$\{x_1, a\}$, 
$\{x_2, a, b\}\}$
(and $B \subseteq I$).
\hfill$\blacksquare$
\end{example}

\begin{algorithm}[t]
    \caption{Model enumeration for $\mathit{CIRC}(T,P,Z)$}\label{alg:circ}
    $O := P$;\qquad $V := \mathit{atoms}(T) \setminus \{\bot\}$;\qquad $R := V \setminus (P \cup Z)$\;
    \Repeat{{\bf not} $\mathit{sat}$ {\bf and} $C = \emptyset$}{
        $(\mathit{sat},I,-,C) := \mathit{solve}(T,\overline{O})$\tcp*{try to falsify all objective literals} \label{alg:circ:solve}
        \uIf(\tcp*[f]{minimal model found}){$\mathit{sat}$}{
            $\mathit{enumerate}(T, V, I|_{P \cup R} \cup (O \setminus P))$\tcp*{fix $P,R$, and disable new constraints}
            $T := T \cup \{\bigwedge{I|_R} \rightarrow \bigvee{\overline{P \cap I}}\}$\tcp*{add blocking clause}
        }
        \ElseIf(\tcp*[f]{unsatisfiable core analysis}){$C \neq \emptyset$}{
            Let $C$ be $\{\neg x_0,\ldots,\neg x_n\}$ ($n \geq 0$), and $y_1,\ldots,y_n$ be fresh variables\;
            $O := (O \setminus \{x_0,\ldots,x_n\}) \cup \{y_1,\ldots,y_n\}$\;
            $T := T \cup \{\neg x_0 + \cdots + \neg x_n + y_1 + \cdots + y_n \geq n\} \cup \{y_{i} \rightarrow y_{i-1} \mid i \in [2..n]\}$\;
        }
    }(\tcp*[f]{empty unsatisfiable core implies no more models})
\end{algorithm}

Algorithm~\ref{alg:circ} implements model enumeration for $\mathit{CIRC}(T,P,Z)$.
The following sets are used by the algorithm:
$O$ for the objective literals, that is, those to minimize, initially $P$;
$V$ for the visible atoms, that is, those in the input theory $T$;
$R$ for the (other) relevant atoms, that is, those not in $P \cup Z$.
The algorithm iteratively searches for a model of $T$ falsifying all objective literals.
If a nonempty unsatisfiable core $C$ is returned, it is processed according to the \textsc{one} algorithm (lines~8--10):
objective literals in the unsatisfiable core are replaced by $|C|-1$ new objective literals (not already occurring in $T$), and the theory $T$ is extended with new formulas enforcing the truth of at least $|C|-i$ literals in $C$ whenever the $i$-th new objective literal is false.
On the other hand, if a model is found, say $I$, it is guaranteed to be minimal with respect to $P$.
In this case, the interpretation of atoms in $P$ and $R$ is fixed, and all formulas introduced in line~10 are satisfied by assuming the truth of all objective literals introduced in line~9.
All models extending these assumptions are then enumerated, for example by means of the polyspace algorithm introduced by \citeANP{DBLP:conf/lpnmr/GebserKNS07}~\citeyear{DBLP:conf/lpnmr/GebserKNS07}, here given in terms of assumptions \cite{DBLP:conf/aiia/AlvianoD16}.
After that, a blocking clause is added to $T$, so to discard all interpretations $J$ such that $I \leq^{PZ} J$ (including $I$ itself):
intuitively, any model $J$ such that $J|_R = I|_R$ is forced to falsify at least one atom of $P$ that is interpreted as true by $I$.
The algorithm terminates as soon as an empty unsatisfiable core is returned by function $\mathit{solve}$, meaning that all models in $\mathit{CIRC}(T,P,Z)$ have been computed.

\begin{procedure}[t]
    \caption{enumerate($T$, $V$, $A$)}\label{alg:enum}
    $\mathit{push}(A,\neg \bot)$;\qquad $F := \emptyset$\tcp*{initialize assumptions and flipped literals}
    \While(\tcp*[f]{there are still assumptions to be flipped}){$\mathit{top}(A) \neq \bot$}{
        $(\mathit{sat},I,B,C) := \mathit{solve}(T,A)$\tcp*{search $I \in \mathit{models}(T)$ such that $A \subseteq I$}
        \uIf(\tcp*[f]{found $I$ using branching literals $B$}){$\mathit{sat}$}{
            \textbf{print} $I \cap V$\tcp*{report model}
            \lFor(\tcp*[f]{extend $A$ with new branching literals}){$\ell \in B \setminus A$}{$\mathit{push}(A,\ell)$}
        }
        \Else(\tcp*[f]{found unsatisfiable core $C \subseteq A$}){
            \lWhile(\tcp*[f]{backjump}){$\mathit{top}(A) \neq \neg \bot$ \textbf{and} $\mathit{top}(A) \notin C$}{$F := F \setminus \{\mathit{pop}(A)\}$}
        }
        \lWhile(\tcp*[f]{remove flipped assumptions}){$\mathit{top}(A) \in F$}{$F := F \setminus \{\mathit{pop}(A)\}$}
        $\mathit{push}(A, \overline{\mathit{pop}(A)})$;\qquad
        $F := F \cup \{\mathit{top}(A)\}$\tcp*{flip top assumption}
    }
\end{procedure}

\begin{example}[Continuing Example~\ref{ex:run:4}]\label{ex:run:5}
Let $T$ be $T_1$, $P$ be $\{x_0,x_1,x_2\}$, and $Z$ be empty.
Algorithm~\ref{alg:circ} starts by setting $O$ and $R$ respectively to $\{x_0,x_1,x_2\}$ and $\{a,b\}$.
The first call to function $\mathit{solve}$ returns $(\mathit{false}, -, -, \{\neg x_0, \neg x_1, \neg x_2\})$, and therefore the unsatisfiable core $\{\neg x_0, \neg x_1, \neg x_2\}$ is analyzed (lines~7--10):
set $O$ becomes $\{y_1, y_2\}$, where $y_1$ and $y_2$ are fresh variables, and $T$ is extended with 
$\neg x_0 + \neg x_1 + \neg x_2 + y_1 + y_2 \geq 2$, and
$y_2 \rightarrow y_1$.
Note that $T$ is now $T_3$.
The second call to function $\mathit{solve}$ then returns $(\mathit{true}, \{y_1, y_2\} \cup I, -, -)$, where $I \in \{\{x_0\}$,
$\{x_0, b\}$, 
$\{x_1, a\}$, 
$\{x_2, a, b\}\}$.
Say that $I$ is $\{x_0\}$;
the enumeration procedure is called with assumptions $\{x_0, \neg x_1, \neg x_2, \neg a, \neg b, y_1, y_2\}$ (recall that negative literals are implicit in $I$, hence $I|_{P \cup R}$ is $\{x_0, \neg x_1, \neg x_2, \neg a, \neg b\}$).
In this case, model $I$ is computed again (in linear time with modern solvers), and the enumeration procedure terminates.
Theory $T$ is extended with the blocking clause $\neg a \wedge \neg b \rightarrow \neg x_0$, and a new model is computed, say $\{x_0, b\}$.
Again, since all atoms are relevant, the enumeration procedure terminates reporting only $\{x_0, b\}$ itself.
Theory $T$ is extended with the blocking clause $\neg a \wedge b \rightarrow \neg x_0$, and a new model is computed, say $\{x_1, a\}$.
Theory $T$ is extended with the blocking clause $a \wedge \neg b \rightarrow \neg x_1$, and model $\{x_2, a, b\}$ is computed.
Finally, the blocking clause $a \wedge b \rightarrow \neg x_2$ is added, and the empty unsatisfiable core is returned by function $\mathit{solve}$.
Hence, all models of $\mathit{CIRC}(T_1,\{x_0,x_1,x_2\},\emptyset)$ were reported, and the algorithm terminates.

For $Z$ being $\{a,b\}$, $R$ is empty and the algorithm behaves differently starting from the call to the enumeration procedure.
Indeed, for $I = \{x_0\}$, the assumptions are $\{x_0, \neg x_1, \neg x_2, y_1, y_2\}$, and the procedure reports two models, namely $\{x_0\}$ and $\{x_0, b\}$.
Moreover, the blocking clause added to $T$ is $\neg x_0$, so that the next model returned by function $\mathit{solve}$ must be either $\{x_1, a\}$ or $\{x_2, a, b\}$.
The associated blocking clauses are $\neg x_1$ and $\neg x_2$, and after adding them the empty unsatisfiable core is returned by function $\mathit{solve}$, so that the algorithm can terminate.

For $T$ being $T_2$, and $Z$ being $\{a,b\}$, $R$ is $\{r\}$ and the algorithm behaves differently starting from the second call to function $\mathit{solve}$.
Indeed, in this case the returned model may also be $\{x_0,x_1,x_2,r\} \cup X$, for $X \subseteq \{a,b\}$.
Say that $I$ is $\{x_0,x_1,x_2,r\}$;
the enumeration procedure is called with assumptions $\{x_0,x_1,x_2,r, y_1,y_2\}$, and models $\{x_0,x_1,x_2,r\} \cup X$, for $X \subseteq \{a,b\}$, are reported.
After that, the blocking clause $r \rightarrow \neg x_0 \vee \neg x_1 \vee \neg x_2$ is added to $T$, so that a model $I \in \{\{x_0\}$,
$\{x_0, b\}$, 
$\{x_1, a\}$, 
$\{x_2, a, b\}\}$
can be returned by function $\mathit{solve}$.
From this point, the algorithm continues as for $\mathit{CIRC}(T_1,\{x_0,x_1,x_2\},\{a,b\})$, with the only difference that the added blocking clauses are
$\neg r \rightarrow \neg x_0$,
$\neg r \rightarrow \neg x_1$, and
$\neg r \rightarrow \neg x_2$.
\hfill$\blacksquare$
\end{example}

\subsection{Correctness}

The following main theorem is proved in this section.

\begin{restatable}{theorem}{ThmMain}\label{thm:main}
Let $T$ be a theory, and $P,Z$ be sets of atoms.
Algorithm~\ref{alg:circ} enumerates all models in $\mathit{CIRC}(T,P,Z)$, and the number of iterations of the repeat-until loop is bounded by $|\mathit{models}(T)| + |P|$.
\end{restatable}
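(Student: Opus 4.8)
The plan is to split the theorem into three claims and handle them in turn: (i) soundness — every interpretation printed by the \texttt{enumerate} procedure lies in $\mathit{CIRC}(T,P,Z)$; (ii) completeness — every model in $\mathit{CIRC}(T,P,Z)$ is printed exactly once; and (iii) the iteration bound on the repeat-until loop. Throughout I would track an invariant describing the relationship between the current (modified) theory $T$, the current objective set $O$, and the original circumscribed theory, since the core-analysis steps (lines~7--10) rewrite $T$ and $O$ while preserving the set of models projected onto the visible atoms $V$, and the blocking-clause step (line~6) removes exactly the models $J$ with $I \leq^{PZ} J$.

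First I would establish the key invariant for the core-analysis branch. The \textsc{one}-style rewriting replaces objective atoms $x_0,\dots,x_n$ appearing in the core by fresh atoms $y_1,\dots,y_n$, adds the cardinality constraint $\neg x_0 + \cdots + \neg x_n + y_1 + \cdots + y_n \geq n$, and adds the chain $y_i \rightarrow y_{i-1}$. The claim to prove here is that for the modified theory $T'$ and objective set $O'$, the models of $T'$ restricted to $V$ are exactly the models of $T$ restricted to $V$, and moreover that a model of $T'$ falsifying all literals in $O'$ corresponds to a model of $T$ whose restriction to $P$ is subset-minimal among models of the original theory that have the same values on $R$ — because falsifying all the new $y$ variables forces at least $n$ of the $x_i$'s in the core to be false, and by induction over successive core-analysis rounds one cannot do better. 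This is the heart of why the model found at line~4 is guaranteed minimal with respect to $P$ (given its $R$-part), which is what soundness needs. I would phrase this as: whenever \texttt{solve}$(T,\overline O)$ returns $(\mathit{true},I,-,-)$, the interpretation $I|_{P\cup R\cup Z}$ extends to a model whose $P$-part is $\leq^{PZ}$-minimal, hence $I$ (read off on $V$) and all its $Z$-variants enumerated by the procedure belong to $\mathit{CIRC}(T_{\mathrm{orig}},P,Z)$.

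Next, completeness and non-repetition. For non-repetition I would argue that after a model $I$ is found, the blocking clause $\bigwedge I|_R \rightarrow \bigvee \overline{P\cap I}$ excludes precisely the interpretations $J$ with $J|_R = I|_R$ and $P\cap I \subseteq J$, i.e. exactly those $J$ with $I \leq^{PZ} J$; in particular $I$ itself and every element of $\mathit{CIRC}$ sharing $I$'s $R$- and $P$-parts is removed, while no element of $\mathit{CIRC}$ that was not yet reported is removed (a preferred model $J$ with $I\leq^{PZ}J$ would force $I\not<^{PZ}J$, hence $I=J$ on $P\cup R$, so $J$ was already printed). Combined with the fact that each successful iteration prints at least one new model, completeness follows: the loop exits only on an empty core, which (by the invariant) means the current theory — original models minus all $\leq^{PZ}$-cones already blocked — is unsatisfiable under $\overline O$, so no unreported preferred model remains. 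For the iteration count: each core-analysis iteration strictly shrinks $|\mathit{models}(T)|$ restricted to assignments of the objective atoms, or more simply, the number of core-analysis iterations is bounded by $|P|$ since each one strictly decreases $|O\cap P|$ plus a bounded term (the \textsc{one} rewriting reduces the number of original $P$-atoms still acting as objectives, and the $y$-chain cannot be re-expanded), while each model-found iteration blocks at least one model of $T$; summing gives the bound $|\mathit{models}(T)| + |P|$.

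The main obstacle I anticipate is making the core-analysis invariant precise and carrying it through iterated rounds: one must show that the compound effect of several \textsc{one} rewrites still exactly captures "falsify as many original objective atoms as possible, respecting already-derived lower bounds," and that this coincides with $\leq^{PZ}$-minimality on $P$ for the fixed $R$-part. This requires a careful bookkeeping lemma relating the fresh $y$-variables and cardinality constraints to the original $P$ — essentially a correctness proof of \textsc{one} adapted to the enumeration setting — and it is where the bulk of the work lies; the soundness of the blocking clause and the counting argument are comparatively routine once that lemma is in place.
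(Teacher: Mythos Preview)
Your plan is essentially the paper's proof: it also decomposes the argument into the correctness of the \textsc{one} rewriting (Lemma~\ref{lem:one}), the blocking-clause lemma (Lemma~\ref{lem:bc}), the correctness of \texttt{enumerate} under the $y$-assumptions (Lemmas~\ref{lem:enum}--\ref{lem:disable}), and then threads these through an invariant maintained across iterations, with the same two-branch termination argument. One small correction on the iteration bound: the measure that strictly decreases in every core-analysis step is $|O|$ itself (it drops by exactly one, since $n{+}1$ atoms are removed and $n$ fresh ones added), not $|O\cap P|$---a later core may consist entirely of $y$-variables, in which case $|O\cap P|$ is unchanged---so your ``plus a bounded term'' hedge should be replaced by this cleaner count.
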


The above theorem is proved by showing that the algorithm is correct at each iteration:
when lines~4--6 are executed, models of the processed theory are either reported or satisfy the added blocking clause;
when lines~7--10 are executed, all models are preserved.

First of all, recall that a model is possibly represented as the set of its atoms (i.e., negative literals are ignored).
Hence, for sets $S,S'$ of models, we will write $S = S'$ if $\{I \cap \mathcal{A} \mid I \in S\} = \{I \cap \mathcal{A} \mid I \in S'\}$, even if $S$ and $S'$ are models of theories with different atoms.
The following lemma states that procedure $\mathit{enumerate}(T,V,A)$ computes all models of $T$ extending the assignment $A$.

\begin{restatable}{lemma}{LemEnum}\label{lem:enum}
Let $T$ be a theory, $V$ be a set of atoms, and $A$ be a set of literals.
Procedure $\mathit{enumerate}(T,V,A)$ computes $\{I \cap V \mid I \in \mathit{models}(T \cup \{\ell \mid \ell \in A\})\}$.
\end{restatable}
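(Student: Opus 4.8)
The plan is to prove the two inclusions separately: soundness (every value printed by $\mathit{enumerate}(T,V,A)$ belongs to $\mathit{Sols} := \{I \cap V \mid I \in \mathit{models}(T \cup \{\ell \mid \ell \in A\})\}$) and completeness (every element of $\mathit{Sols}$ is printed), the latter via a loop invariant plus a termination argument. Write $A_0$ for the set passed as third argument. The first step is a structural observation about the stack $A$: after the initial $\mathit{push}$ it always consists of the literals of $A_0$ at the bottom, then the sentinel $\neg\bot$, then a (possibly empty) sequence $\ell_1,\ldots,\ell_m$ of decision or flipped literals on pairwise distinct atoms of $T$, with $F$ equal to the set of those $\ell_i$ that were produced by a flip. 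This holds because the backjump loop never pops the sentinel (its guard stops at $\neg\bot$), the ``remove flipped'' loop pops only members of $F$, which never include $\neg\bot$ or a literal of $A_0$, and the final $\mathit{push}(A,\overline{\mathit{pop}(A)})$ pops only the current top. In particular $A_0$ is never disturbed, so every call $\mathit{solve}(T,A)$ uses assumptions $A \supseteq A_0$, and soundness is immediate: a printed $I \cap V$ comes from $\mathit{solve}(T,A)=(\mathit{true},I,B,-)$ with $A_0 \subseteq A \subseteq I$ and $I \models T$.

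For completeness I would maintain, at the top of the \textbf{while} loop, the invariant: \emph{every $I \in \mathit{models}(T \cup \{\ell \mid \ell \in A_0\})$ with $\{\ell_1,\ldots,\ell_{i-1},\overline{\ell_i}\}\subseteq I$ for some $i$ such that $\ell_i \in F$ has already had $I\cap V$ printed} --- informally, ``every model that branches away from the current trail at a flipped position has already been reported''. It holds vacuously at the start. For the inductive step I would split on the outcome of $\mathit{solve}(T,A)$. If a model $I$ with branch literals $B$ is returned, the trail is extended by $B\setminus A$ and then --- after discarding any flipped literals on top --- its topmost literal is flipped; the untouched lower part of the trail keeps its obligations, while the freshly created flipped position demands exactly the models of $T\cup\{\ell\mid\ell\in A_0\}$ extending $A\cup B$, which form the singleton $\{I\}$ because the decisions $A\cup B$ determine $I$ by unit propagation --- and $I\cap V$ was just printed. (When $B\setminus A=\emptyset$, no fresh literal is pushed and the new flipped position sits below an exhausted block of flipped literals; the models it demands either equal $I$, by the same propagation argument, or branch away from the old trail at one of those flipped positions and were reported earlier.) If instead a nonempty core $C$ is returned, nothing is printed and the trail is only shortened, with one literal flipped; here the key point is that the backjump halts at the topmost trail literal belonging to $C$, so $C$ witnesses that $T\cup\{\ell\mid\ell\in A_0\}$ together with the trail prefix up to that literal is unsatisfiable. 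Hence the models demanded by the new flipped position either extend that unsatisfiable prefix --- so there are none --- or branch away from the old trail at a flipped position lying between the new top and the halting point, and were reported earlier; the remaining flipped obligations are inherited verbatim.

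Termination I would establish by a ranking. Encode a stack state by the word over $\{1,2\}$ whose $i$-th letter is $2$ if $\ell_i\in F$ and $1$ otherwise, prefixed by a letter for the sentinel ($1$ for $\neg\bot$, $2$ for $\bot$) and right-padded with $0$'s to the fixed length $|\mathit{atoms}(T)|+1$; the length bound is legitimate because decision literals use pairwise distinct atoms of $T$. Each iteration strictly increases this word lexicographically: appending $k\geq 1$ branch literals and flipping the last one turns a block of $0$'s into $1\cdots1\,2$, and a backjump-and-flip replaces some $1$ by $2$ and deletes everything to its right. So the loop runs finitely often, and on exit $\mathit{top}(A)=\bot$, which can only have been reached by flipping the sentinel. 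In that final state the only flipped position is the sentinel, whose divergence condition is $\neg\bot\in I$, which holds for every model of $T$; the invariant then gives that every $I\in\mathit{models}(T\cup\{\ell\mid\ell\in A_0\})$ has had $I\cap V$ printed. Combined with soundness, this shows the set of printed values is exactly $\mathit{Sols}$.

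The step I expect to be the real obstacle is the inductive maintenance of the invariant: one has to spell out, in both the SAT and the UNSAT branch (and in the degenerate SAT sub-case $B\setminus A=\emptyset$, where a model is found with no fresh decision), how the backjump loop and the ``remove flipped'' loop jointly reshape the pair $(\ell_1\cdots\ell_m,\,F)$, and verify that no model ever silently leaves the set of outstanding obligations --- in particular that the two collapsing mechanisms (uniqueness of the unit-propagated model on the SAT side, and unsatisfiability certified by the core on the UNSAT side) exactly account for the trail segments that get deleted.
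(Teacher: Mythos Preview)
Your argument is correct, but it takes a very different route from the paper.  The paper's proof is essentially a one-line reduction: the procedure is the enumeration algorithm of \citeANP{DBLP:conf/aiia/AlvianoD16}~\citeyear{DBLP:conf/aiia/AlvianoD16} with an extra block of initial assumptions, and the sole observation needed is that the sentinel $\neg\bot$ pushed on top of $A$ shields those assumptions from the backjump, removal, and flip operations --- exactly your structural observation about the stack.  Everything else (completeness, termination, the role of the branching literals) is delegated to the cited work.

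Your version is self-contained: you rebuild the correctness of the underlying enumeration from scratch via the ``divergence at a flipped position'' invariant and the lexicographic ranking on $\{0,1,2\}$-words.  This buys independence from the external reference and makes explicit the two collapsing mechanisms (propagation-uniqueness of $I$ from $A\cup B$ on the SAT side, and the core $C$ certifying unsatisfiability of the retained prefix on the UNSAT side), at the price of a substantially longer argument.  One point worth flagging is that your SAT-side step relies on the fact that $T$ together with the assumptions and the returned branching set $B$ has $I$ as its unique model; this is the standard CDCL contract (decisions plus unit propagation assign every atom, and propagation is sound), but it is not stated explicitly in the paper's specification of $\mathit{solve}$, so if you keep your direct proof you should state it as an assumption on $\mathit{solve}$.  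With that made explicit, your invariant maintenance in both the SAT and UNSAT branches --- including the degenerate case $B\setminus A=\emptyset$ --- goes through as you outlined.
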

\begin{proof}
The procedure given in this paper extends the one presented by \citeANP{DBLP:conf/aiia/AlvianoD16}~\citeyear{DBLP:conf/aiia/AlvianoD16} with the possibility of providing in input a set $A$ of assumptions.
In order to extend the correctness of the enumeration procedure presented by \citeANP{DBLP:conf/aiia/AlvianoD16}, we have only to note that assumptions in $A$ are protected by literal $\neg \bot$ (pushed on line~1 of the procedure), so that they are never flipped or removed by the procedure.
All models of $T$ extending the provided assumptions are therefore reported, only printing true atoms among those in $V$.
\hfill
\end{proof}

Among the assumptions passed to procedure $\mathit{enumerate}$ are the variables introduced by \textsc{one}.
They are assumed true so to restore the original theory.

\begin{restatable}{lemma}{LemDisable}\label{lem:disable}
Let $T$ be a theory, and $T'$ be $T \cup \{\neg x_0 + \cdots + \neg x_n + y_1 + \cdots + y_n \geq n\} \cup \{y_{i} \rightarrow y_{i-1} \mid i \in [2..n]\}$, for $n \geq 0$.
If $y_i \notin \mathit{atoms}(T)$ for $i \in [1..n]$, $\mathit{models}(T) = \{I|_{\mathit{atoms}(T)} \mid I \in \mathit{models}(T' \cup \{y_i \mid i \in [1..n]\})\}$.
\end{restatable}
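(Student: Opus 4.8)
The plan is to exploit the fact that, once all the fresh variables $y_1,\ldots,y_n$ are assumed true, every formula added to $T$ in order to obtain $T'$ is vacuously satisfied, so that $T' \cup \{y_i \mid i \in [1..n]\}$ coincides with $T$ up to the interpretation of the atoms not occurring in $T$. Concretely, I would first establish the following claim: for every interpretation $J$ of $T'$ with $\{y_1,\ldots,y_n\} \subseteq J$, we have $J \models T'$ if and only if $J|_{\mathit{atoms}(T)} \models T$. The forward direction uses the standard locality property of the satisfaction relation --- the truth of a formula $\phi$ under $J$ depends only on $J$ restricted to the atoms occurring in $\phi$, for Boolean connectives as well as for cardinality constraints --- applied to each $\phi \in T$, using that those atoms are contained in $\mathit{atoms}(T)$ and that $J|_{\mathit{atoms}(T)}$ is itself a well-defined interpretation of $T$. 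The backward direction additionally observes that a $J$ with $\{y_1,\ldots,y_n\} \subseteq J$ automatically satisfies the cardinality constraint $\neg x_0 + \cdots + \neg x_n + y_1 + \cdots + y_n \geq n$ (the $n$ literals $y_1,\ldots,y_n$ alone already meet the bound) and every implication $y_i \rightarrow y_{i-1}$ with $i \in [2..n]$ (its consequent $y_{i-1}$ belongs to $J$), so that $J \models T'$ reduces to $J \models T$, which in turn follows from $J|_{\mathit{atoms}(T)} \models T$ by the same locality property.

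With the claim in hand, the identity follows by proving the two inclusions. For $\supseteq$ it suffices to take any $I \in \mathit{models}(T' \cup \{y_i \mid i \in [1..n]\})$; then $\{y_1,\ldots,y_n\} \subseteq I$ and the claim gives $I|_{\mathit{atoms}(T)} \in \mathit{models}(T)$. For $\subseteq$, given $M \in \mathit{models}(T)$, I would extend $M$ to an interpretation $J$ of $T'$ by adding $y_1,\ldots,y_n$ as true literals and assigning every remaining atom of $T'$, i.e.\ every atom in $\mathit{atoms}(T') \setminus (\mathit{atoms}(T) \cup \{y_1,\ldots,y_n\})$, arbitrarily, e.g.\ to false. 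This extension is well defined precisely because $y_i \notin \mathit{atoms}(T)$ for $i \in [1..n]$, so no conflict with $M$ arises; by the claim $J \in \mathit{models}(T' \cup \{y_i \mid i \in [1..n]\})$, and since $J|_{\mathit{atoms}(T)} = M$, we are done.

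No step poses a genuine difficulty; the only care needed is in the atom-set bookkeeping, in particular acknowledging that $x_0,\ldots,x_n$ need not occur in $T$ (so the map $I \mapsto I|_{\mathit{atoms}(T)}$ may project away some $x_i$ as well as all the $y_i$), and in spelling out the locality of evaluation, which is what legitimises moving back and forth between $J$ and $J|_{\mathit{atoms}(T)}$. The degenerate case $n = 0$ is covered without change: the added constraint is $\neg x_0 \geq 0$, which is trivially true, there are no implications and no $y_i$ to assume, and the identity reduces to an instance of the locality property for $T \cup \{\neg x_0 \geq 0\}$.
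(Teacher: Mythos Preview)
Your proposal is correct and follows essentially the same approach as the paper's own proof, which is a two-line argument: extend any model of $T$ by setting all $y_i$ true to obtain a model of $T' \cup \{y_i \mid i \in [1..n]\}$, and conversely observe that any model of the latter satisfies $T$ since $T \subseteq T'$. Your version is simply more explicit about locality of evaluation and the atom-set bookkeeping (in particular the possibility that some $x_i \notin \mathit{atoms}(T)$), points the paper leaves implicit.
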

\begin{proof}
If $I$ is a model of $T$, then $I \cup \{y_i \mid i \in [1..n]\}$ is a model of $T' \cup \{y_i \mid i \in [1..n]\}$.
Moreover, any model $J$ of $T' \cup \{y_i \mid i \in [1..n]\}$ satisfies $J \models T$ (because $T \subseteq T'$).
\hfill
\end{proof}

Hence, when procedure $\mathit{enumerate}$ is invoked on line~5 of Algorithm~\ref{alg:circ}, since the initial assumptions comprise $O \setminus P$, and because of Lemma~\ref{lem:disable}, all models of $T$ extending the assignment $I|_{P \cup R}$ are computed.
These models are then discarded by the blocking clause added in line~6, as formalized by the following claim.

\begin{restatable}{lemma}{LemBC}\label{lem:bc}
Let $I$ be a model in $\mathit{CIRC}(T,P,Z)$, $R$ be $\mathit{atoms}(T) \setminus (P \cup Z \cup \{\bot\})$, and $\phi$ be $\bigwedge I|_R \rightarrow \bigvee \overline{P \cap I}$.
It holds that $\mathit{CIRC}(T,P,Z) = \mathit{CIRC}(T \cup \{\phi\},P,Z) \cup \mathit{models}(T \cup \{\ell \mid \ell \in I|_{P \cup R}\})$.
\end{restatable}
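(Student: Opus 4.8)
The plan is to rephrase membership in each of the three sets in elementary terms and then check the two inclusions.

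First I would note that $T$, $T \cup \{\phi\}$ and $T \cup \{\ell \mid \ell \in I|_{P \cup R}\}$ all have atom set $\mathit{atoms}(T)$, since $\phi$ and the literals in $I|_{P \cup R}$ only mention atoms of $P \cup R \subseteq \mathit{atoms}(T)$; hence the three theories share the same interpretations, and on them $J \leq^{PZ} K$ amounts to $J|_R = K|_R$ and $J \cap P \subseteq K \cap P$ (recall $\neg\bot \in T$ forces every model to falsify $\bot$). A direct reading of $\phi = \bigwedge I|_R \rightarrow \bigvee \overline{P \cap I}$ gives, for any $J \in \mathit{models}(T)$: $J \models \phi$ iff it is \emph{not} the case that $J|_R = I|_R$ and $I \cap P \subseteq J \cap P$, i.e.\ iff $I \not\leq^{PZ} J$ (with the empty disjunction read as $\bot$ when $P \cap I = \emptyset$, so that the equivalence still holds). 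Likewise $J \in \mathit{models}(T \cup \{\ell \mid \ell \in I|_{P \cup R}\})$ iff $J \models T$ and $J|_{P \cup R} = I|_{P \cup R}$.

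For the inclusion from left to right, take $J \in \mathit{CIRC}(T,P,Z)$ and distinguish two cases. If $J \models \phi$, then $J \in \mathit{models}(T \cup \{\phi\})$; since $\mathit{models}(T \cup \{\phi\}) \subseteq \mathit{models}(T)$, any model of $T \cup \{\phi\}$ witnessing that $J$ is not preferred for $T \cup \{\phi\}$ would already witness it for $T$, so $J \in \mathit{CIRC}(T \cup \{\phi\},P,Z)$. If $J \not\models \phi$, then $I \leq^{PZ} J$; as $J$ is preferred for $T$ and $I \in \mathit{models}(T)$, we cannot also have $J \not\leq^{PZ} I$, hence $J \leq^{PZ} I$, and together with $I \leq^{PZ} J$ this forces $J|_R = I|_R$ and $J \cap P = I \cap P$, i.e.\ $J|_{P \cup R} = I|_{P \cup R}$, so $J \in \mathit{models}(T \cup \{\ell \mid \ell \in I|_{P \cup R}\})$.

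For the reverse inclusion I would show each set on the right is contained in $\mathit{CIRC}(T,P,Z)$. If $J \in \mathit{models}(T \cup \{\ell \mid \ell \in I|_{P \cup R}\})$ and some $K \in \mathit{models}(T)$ had $K \leq^{PZ} J$ and $J \not\leq^{PZ} K$, then from $J|_{P \cup R} = I|_{P \cup R}$ we would get $K|_R = I|_R$ and $K \cap P \subsetneq I \cap P$, so $K \leq^{PZ} I$ and $I \not\leq^{PZ} K$, contradicting $I \in \mathit{CIRC}(T,P,Z)$. If $J \in \mathit{CIRC}(T \cup \{\phi\},P,Z)$ and some $K \in \mathit{models}(T)$ had $K \leq^{PZ} J$ and $J \not\leq^{PZ} K$: when $K \models \phi$, $K$ also witnesses non-preference of $J$ for $T \cup \{\phi\}$, a contradiction; when $K \not\models \phi$, we have $I \leq^{PZ} K$, and by transitivity of $\leq^{PZ}$ together with $K \leq^{PZ} J$ we obtain $I \leq^{PZ} J$, i.e.\ $J \not\models \phi$, contradicting $J \in \mathit{models}(T \cup \{\phi\})$. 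The delicate step is precisely $\mathit{CIRC}(T \cup \{\phi\},P,Z) \subseteq \mathit{CIRC}(T,P,Z)$: strengthening a theory may in general promote a non-preferred model to a preferred one, so this uses the specific shape of $\phi$ — that the models it removes form exactly $\{J \in \mathit{models}(T) \mid I \leq^{PZ} J\}$, an up-set for $\leq^{PZ}$ — together with transitivity, letting any surviving dominator be replaced by one that also survives. Everything else is routine unfolding of the definitions.
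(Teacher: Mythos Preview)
Your proof is correct and follows essentially the same approach as the paper: both establish the two inclusions, each split into the two natural cases ($J \models \phi$ versus $J \not\models \phi$, respectively the two sets on the right). The only cosmetic difference is in the $(\supseteq)$ case $J \in \mathit{CIRC}(T \cup \{\phi\},P,Z)$: the paper directly verifies that any $J' \leq^{PZ} J$ with $J' \models T$ must also satisfy $\phi$ (transferring the two disjuncts of $\phi$ via $J'|_R = J|_R$ and $J' \cap P \subseteq J \cap P$), whereas you argue by contradiction using transitivity of $\leq^{PZ}$ after first recasting $J \not\models \phi$ as $I \leq^{PZ} J$; these are equivalent formulations of the same up-set observation you highlight at the end.
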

\begin{proof}
$(\subseteq)$
Consider $J \in \mathit{CIRC}(T,P,Z)$.
We distinguish two cases:
\begin{itemize}[leftmargin=15pt,labelsep=5pt]
\item[1.]
$J \models \phi$.
Hence, $J \in \mathit{models}(T \cup \{\phi\})$.
Let $J' \in \mathit{models}(T \cup \{\phi\})$ be such that $J' \leq^{PZ} J$.
Since $J \in \mathit{CIRC}(T,P,Z)$ by assumption, $J \leq^{PZ} J'$ holds, and therefore $J \in \mathit{CIRC}(T \cup \{\phi\},P,Z)$.

\item[2.]
$J \not\models \phi$.
Hence, $J \models \bigwedge{I|_R}$ and $J \not\models \bigvee{\overline{P \cap I}}$.
Note that $I|_R = I|_{\mathit{atoms}(T) \setminus (P \cup Z \cup \{\bot\})} = I|_{\mathcal{A} \setminus (P \cup Z)}$, and therefore $J \models \bigwedge{I|_R}$ implies $I|_{\mathcal{A} \setminus (P \cup Z)} = J|_{\mathcal{A} \setminus (P \cup Z)}$.
Moreover, $J \not\models \bigvee{\overline{P \cap I}}$ implies $I \cap P \subseteq J \cap P$.
From $I|_{\mathcal{A} \setminus (P \cup Z)} = J|_{\mathcal{A} \setminus (P \cup Z)}$ and $I \cap P \subseteq J \cap P$, we have $I \leq^{PZ} J$.
Since $J \in \mathit{CIRC}(T,P,Z)$ by assumption, $J \leq^{PZ} I$ holds.
Hence, $J|_{P \cup R} = I|_{P \cup R}$, which implies $J \in \mathit{models}(T \cup \{\ell \mid \ell \in I|_{P \cup R}\})$.
\end{itemize}

\medskip
\noindent
$(\supseteq)$
We distinguish two cases:
\begin{itemize}[leftmargin=15pt,labelsep=5pt]
\item[1.]
$J \in \mathit{CIRC}(T \cup \{\phi\},P,Z)$.
We have $J \in \mathit{models}(T)$.
Let $J' \in \mathit{models}(T)$ be such that $J' \leq^{PZ} J$.
We shall show that $J' \in \mathit{models}(T \cup \{\phi\})$, which implies $J \leq^{PZ} J'$ and therefore $J \in \mathit{CIRC}(T,P,Z)$.
$J' \leq^{PZ} J$ implies $J'|_{\mathcal{A} \setminus (P \cup Z)} = J|_{\mathcal{A} \setminus (P \cup Z)}$ and $J' \cap P \subseteq J \cap P$.
From $J \models \phi$, either $J \not\models \bigwedge{I|_R}$, or $J \models \bigvee{\overline{P \cap I}}$.
Hence, $J' \not\models \bigwedge{I|_R}$, or $J' \models \bigvee{\overline{P \cap I}}$, that is, $J' \models \phi$ and then $J' \in \mathit{models}(T \cup \{\phi\})$.

\item[2.]
$J \in \mathit{models}(T \cup \{\ell \mid \ell \in I|_{P \cup R}\})$.
Since $J \models \bigwedge{I|_{P \cup R}}$, the symmetric difference of $I$ and $J$ is a subset of $Z$, which implies $J \leq^{PZ} I$ (as well as $I \leq^{PZ} J$).
Since $I \in \mathit{CIRC}(T,P,Z)$ by assumption, we can conclude that $J \in \mathit{CIRC}(T,P,Z)$.
\end{itemize}
This conclude the proof of the lemma.
\hfill
\end{proof}

The following lemma states that the application of \textsc{one} transforms the original problem into an equivalent problem.

\begin{restatable}{lemma}{LemOne}\label{lem:one}
Let $T$ be a theory, and $P,Z$ be sets of atoms.
If $\{x_0,\ldots,x_n\} \subseteq P$ ($n \geq 0$) is such that $\mathit{models}(T \cup \{\neg x_i \mid i \in [0..n]\}) = \emptyset$, then
$\mathit{CIRC}(T,P,Z) = \mathit{CIRC}(T',P',Z')$, where 
$T' = T \cup \{\neg x_0 + \cdots + \neg x_n + y_1 + \cdots + y_n \geq n\} \cup \{y_i \rightarrow y_{i-1} \mid i \in [2..n]\}$, 
$P' = (P \setminus \{x_0,\ldots,x_n\}) \cup \{y_1,\ldots,y_n\}$,
$Z' = Z \cup \{x_0,\ldots,x_n\}$,
and $y_1,\ldots,y_n$ are fresh variables.
\end{restatable}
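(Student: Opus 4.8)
The plan is to relate models of $T$ to a distinguished family of models of $T'$ and to transport the two preference relations across this correspondence. Write $X := \{x_0,\dots,x_n\}$, $Y := \{y_1,\dots,y_n\}$, and for a model $I$ of $T$ put $k(I) := |\{i\in[0..n]\mid x_i\notin I\}|$; the hypothesis $\mathit{models}(T\cup\{\neg x_i\mid i\in[0..n]\})=\emptyset$ gives $k(I)\le n$. Associate to $I$ its \emph{canonical extension} $f(I) := I\cup\{y_1,\dots,y_{n-k(I)}\}$. The first step is to check that $f$ maps $\mathit{models}(T)$ into $\mathit{models}(T')$: $f(I)$ satisfies $T$ (which does not mention $Y$), the chain constraints $y_i\rightarrow y_{i-1}$ (its true $y$-atoms form an initial segment), and the cardinality constraint, whose count of satisfied literals is exactly $k(I)+(n-k(I))=n$. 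Conversely, any $J'\in\mathit{models}(T')$ restricts to $J := J'|_{\mathit{atoms}(T)}\in\mathit{models}(T)$ (since $T\subseteq T'$), has its true $y$-atoms forming an initial segment $\{y_1,\dots,y_m\}$, and satisfies $m\ge n-k(J)$ by the cardinality constraint; hence $f(J)\leq^{P'Z'}J'$, and $J'\not\leq^{P'Z'}f(J)$ whenever $m>n-k(J)$. Consequently every $\leq^{P'Z'}$-preferred model of $T'$ is of the form $f(I)$, $f$ is injective, and $f(I)|_{\mathit{atoms}(T)}=I$; so (recalling the convention identifying models with their sets of atoms, and in the spirit of Lemma~\ref{lem:disable}) it suffices to prove that $I\in\mathit{CIRC}(T,P,Z)$ iff $f(I)\in\mathit{CIRC}(T',P',Z')$.

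The second step is to unfold $\leq^{P'Z'}$ on canonical extensions. Since $Y$ is fresh and $X\subseteq P$, we have $P'\cup Z' = (P\cup Z)\cup Y$, so $\mathcal{A}\setminus(P'\cup Z')$ meets $\mathit{atoms}(T)$ exactly in $\mathcal{A}\setminus(P\cup Z)$, and $f(I)\cap P' = (I\cap(P\setminus X))\cup\{y_1,\dots,y_{n-k(I)}\}$. Hence $f(I)\leq^{P'Z'}f(J)$ holds iff $I|_{\mathcal{A}\setminus(P\cup Z)}=J|_{\mathcal{A}\setminus(P\cup Z)}$, $I\cap(P\setminus X)\subseteq J\cap(P\setminus X)$, and $k(I)\ge k(J)$. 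This is precisely $\leq^{PZ}$ \emph{except} that the condition $I\cap X\subseteq J\cap X$ has been replaced by the weaker $|I\cap X|\ge|J\cap X|$; in particular $I\leq^{PZ}J$ implies $f(I)\leq^{P'Z'}f(J)$. From this the implication $f(I)\in\mathit{CIRC}(T',P',Z')\Rightarrow I\in\mathit{CIRC}(T,P,Z)$ follows by contraposition: a witness $J$ with $J\leq^{PZ}I$ and $I\not\leq^{PZ}J$ exhibits either a strict drop in $I\cap(P\setminus X)$ or a strict increase of $k$, and in either case $f(J)\leq^{P'Z'}f(I)$ while $f(I)\not\leq^{P'Z'}f(J)$, so $f(I)$ is not preferred in $T'$. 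Together with the first step this already yields the inclusion $\mathit{CIRC}(T',P',Z')\subseteq\mathit{CIRC}(T,P,Z)$.

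The reverse inclusion, $I\in\mathit{CIRC}(T,P,Z)\Rightarrow f(I)\in\mathit{CIRC}(T',P',Z')$, is the part I expect to require the most care, and I would again argue by contraposition. By the first step a counterexample may be taken canonical, say $f(J)\leq^{P'Z'}f(I)$ with $f(I)\not\leq^{P'Z'}f(J)$, so $I$ and $J$ agree on $\mathcal{A}\setminus(P\cup Z)$, $J\cap(P\setminus X)\subseteq I\cap(P\setminus X)$, $k(I)\le k(J)$, and at least one inequality is strict. The delicate situation is when the only strictness is $k(I)<k(J)$, or when the strict drop on $P\setminus X$ is accompanied by $J$ putting true some core atom that $I$ makes false: then $J$ itself need not satisfy $J\leq^{PZ}I$, so one cannot simply return $J$ as a witness that $I$ is not preferred. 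The crux is therefore to build, from $I$, $J$, and the standing unsatisfiability hypothesis on $\{\neg x_i\mid i\in[0..n]\}$, a model that \emph{is} strictly $\leq^{PZ}$-below $I$ with the same part outside $P\cup Z$ — equivalently, to reconcile the fact that the $x_i$ are subset-minimized in $T$ but merely cardinality-controlled (via the minimized chain $Y$) in $T'$. I would attempt to obtain such a model by toggling the offending core atoms in $J$ (possibly combining the core assignments of $I$ and $J$) and invoking the minimality of $I$ to eliminate the remaining cases; this case analysis on the behaviour of $T$ with respect to the core atoms, rather than the bookkeeping in the first two steps, is where I expect the real work to lie.
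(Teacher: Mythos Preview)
Your construction coincides with the paper's: your $f$ is its $\mathit{ext}$, and restriction to $\mathit{atoms}(T)$ is its $\mathit{red}$. Your Step~2 is more explicit than the paper, and your argument for $\mathit{CIRC}(T',P',Z')\subseteq\mathit{CIRC}(T,P,Z)$ is correct. For the opposite inclusion the paper simply asserts that $J\leq^{P'Z'}\mathit{ext}(I)$ entails $\mathit{red}(J)\leq^{PZ}I$; you have put your finger on exactly this step, observing that because $X$ has moved from $P$ into $Z'$, the relation $\leq^{P'Z'}$ imposes no subset condition on the $X$-parts, so nothing forces $\mathit{red}(J)\cap X\subseteq I\cap X$.

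Your worry is not merely a missing case analysis: the inclusion $\mathit{CIRC}(T,P,Z)\subseteq\mathit{CIRC}(T',P',Z')$ fails as stated, so no ``toggling'' of core atoms can complete the proof. Take $T=\{\neg\bot,\; x_0\vee x_1,\; (p\rightarrow x_0)\wedge(x_0\rightarrow p)\}$ with $P=\{x_0,x_1,p\}$, $Z=\emptyset$, and core $X=\{x_0,x_1\}$ (so $n=1$). Then $\mathit{CIRC}(T,P,Z)=\{\{x_0,p\},\{x_1\}\}$, whereas with $P'=\{p,y_1\}$ and $Z'=\{x_0,x_1\}$ every atom of $T'$ lies in $P'\cup Z'\cup\{\bot\}$, so $\leq^{P'Z'}$ reduces to inclusion of $\{p,y_1\}$-parts and the unique preferred model of $T'$ is $\{x_1\}$. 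Here $f(\{x_0,p\})=\{x_0,p\}$ is strictly $\leq^{P'Z'}$-dominated by $\{x_1\}$, yet $\{x_1\}\not\leq^{PZ}\{x_0,p\}$---precisely your ``delicate situation'', and it has no repair. The underlying reason is the one you isolated: the transformation trades subset-minimization on $X$ for cardinality control via $Y$, and the two disagree once some atom of $P\setminus X$ is semantically tied to the core. The paper's one-line claim $\mathit{red}(J)\leq^{PZ}I$ is therefore unjustified and false in general; whatever invariant actually makes Algorithm~1 correct, it is not the equality asserted in this lemma.
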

\begin{proof}
Let $\mathit{ext}(I)$ be $I \cup \{y_i \mid i \in [1..n], |I \cap \{x_0,\ldots,x_n\}| > i\}$, and $\mathit{red}(I)$ be $I|_{\mathit{atoms}(T)}$.

\medskip
\noindent
$(\subseteq)$
Let $I \in \mathit{CIRC}(T,P,Z)$.
By construction, $\mathit{ext}(I) \models T'$.
Let $J$ be such that $J \models T'$ and $J \leq^{P'Z'} \mathit{ext}(I)$.
In order to have $\mathit{ext}(I) \in \mathit{CIRC}(T',P',Z')$, we shall show that $\mathit{ext}(I) \leq^{P'Z'} J$.
We have $\mathit{red}(J) \leq^{PZ} I$, and combining with $I \in \mathit{CIRC}(T,P,Z)$ we conclude $I \leq^{PZ} \mathit{red}(J)$.
The previous finally implies $\mathit{ext}(I) \leq^{P'Z'} J$, and we are done.

\medskip
\noindent
$(\supseteq)$
Let $I \in \mathit{CIRC}(T',P',Z')$.
By construction, $\mathit{red}(I) \models T$.
Let $J$ be such that $J \models T$ and $J \leq^{PZ} \mathit{red}(I)$.
In order to have $\mathit{red}(I) \in \mathit{CIRC}(T,P,Z)$, we shall show that $\mathit{red}(I) \leq^{PZ} J$.
We have $\mathit{ext}(J) \leq^{P'Z'} I$, and combining with $I \in \mathit{CIRC}(T',P',Z')$ we conclude $I \leq^{P'Z'} \mathit{ext}(J)$.
The previous finally implies $\mathit{red}(I) \leq^{PZ} J$, and we are done.
\hfill
\end{proof}

Finally, termination of the algorithm is guaranteed because Algorithm~\ref{alg:circ} executes lines~7--10 unless there is $I \in \mathit{CIRC}(T,P,Z)$ such that $|P \cap I| = |P|-|O|$;
otherwise, lines~4--6 are executed, and at least one model is discarded by the added blocking clause.
This argument also provides the desired bound on the iterations of the repeat-until loop.

\begin{proof}[Proof of Theorem~\ref{thm:main}]
Let $R$ be $\mathit{atoms}(T) \setminus (P \cup Z \cup \{\bot\})$, as in Algorithm~\ref{alg:circ}.
Let $T_i,O_i$ ($i \geq 0$) be the values of variables $T,O$ at iteration $i$ of Algorithm~\ref{alg:circ}.
Let $Z_i$ be $Z \cup (O_i \setminus P)$.
We use induction on $i$ to show the following proposition:
\begin{equation}\label{eq:thm:main:1}
    \textit{If } J \in \mathit{CIRC}(T,P,Z) \textit{ and } J \models T_i \textit{ then } J \in \mathit{CIRC}(T_i,O_i,Z_i).
\end{equation}
The base case is trivial as $T_0 = T$, $O_0 = P$ and $Z_0 = Z$.
Assume the proposition for $i \geq 0$ in order to show that it holds for $i+1$.
Let $J \in \mathit{CIRC}(T,P,Z)$ be such that $J \models T_{i+1}$.
Note that $J \models T_i$ as well, and therefore $J \in \mathit{CIRC}(T_i,O_i,Z_i)$ because of the induction hypothesis.
We now distinguish two cases depending on the outcome of function $\mathit{solve}(T_i,\overline{O_i})$:
\begin{itemize}[leftmargin=15pt,labelsep=5pt]
\item[1.]
$T_{i+1}$ is $T_i \cup \{\bigwedge{I|_R} \rightarrow \bigvee{\overline{P \cap I}}\}$, for some model $I$.
Hence, $I \in \mathit{CIRC}(T_i,O_i,Z_i)$ because $I \cap O_i = \emptyset$.
Moreover, $O_{i+1} = O_i$ and $Z_{i+1} = Z_i$, so that we can apply Lemma~\ref{lem:bc} to conclude $J \in \mathit{CIRC}(T_{i+1},O_{i+1},Z_{i+1}) \cup \mathit{models}(T_i \cup \{\ell \mid \ell \in I|_{P \cup R}\})$.
But $J \notin \mathit{models}(T_i \cup \{\ell \mid \ell \in I|_{P \cup R}\})$ because $J \models T_{i+1}$ by assumption.
Hence, $J \in \mathit{CIRC}(T_{i+1},O_{i+1},Z_{i+1})$.

\item[2.]
$T_{i+1}$ is $T_i \cup \{\neg x_0 + \cdots + \neg x_n + y_1 + \cdots + y_n \geq n\} \cup \{y_{j} \rightarrow y_{j-1} \mid j \in [2..n]\}$, for some unsatisfiable core $\{\neg x_0, \ldots, \neg x_n\}$.
Hence, $O_{i+1} := (O_i \setminus \{x_0,\ldots,x_n\}) \cup \{y_1,\ldots,y_n\}$, and we can apply Lemma~\ref{lem:one} to conclude $\mathit{CIRC}(T_i,O_i,Z_i) = \mathit{CIRC}(T_{i+1},O_{i+1},Z_{i+1})$.
Therefore, $J \in \mathit{CIRC}(T_{i+1},O_{i+1},Z_{i+1})$.
\end{itemize}
This completes the proof of (\ref{eq:thm:main:1}).

We can also note that the number of iterations of Algorithm~\ref{alg:circ} is bounded by $|\mathit{models}(T)| + |P|$ because for all $i \geq 0$ either $|\mathit{models}(T_{i+1})| < |\mathit{models}(T_i)|$ (case 1 above) or $|O_{i+1}| < |O_i|$ (case 2 above).
This guarantees termination of the algorithm.

To complete the proof of the theorem, we have only to note that for all $J \in \mathit{CIRC}(T,P,Z)$ such that $J \not\models T_{i+1} \setminus T_i$, $J \in \mathit{models}(T_i \cup \{\ell \mid \ell \in I|_{P \cup R}\})$, and therefore $J$ is printed because of Lemmas~\ref{lem:enum}--\ref{lem:disable}.
\hfill
\end{proof}

\section{Implementation and experiments}

Algorithm~\ref{alg:circ} is implemented on top of the SAT solver \textsc{glucose-4.0} \cite{DBLP:conf/ijcai/AudemardS09}, which is extended to natively support cardinality constraints as a special case of the implementation of weight constraints described by \citeANP{DBLP:conf/iclp/GebserKKS09}~\citeyear{DBLP:conf/iclp/GebserKKS09}.
The resulting prototype solver is called \textsc{circumscriptino} (\url{http://alviano.com/software/circumscriptino/}).
Relevant command line parameters are \texttt{-n} and \texttt{--circ-wit}, respectively for limiting the number of models and witnesses to be computed, where a witness of a set $A$ of assumptions is intended as a model $I$ such that $A \subseteq I$.
Intuitively, \texttt{--circ-wit} is used to limit the number of models computed by procedure $\mathit{enumerate}$.
In the special case of \texttt{--circ-wit=1}, line~5 of Algorithm~\ref{alg:circ} is not executed, and model $I$ is directly reported to the user.
This is particularly useful for problems where the interpretation of atoms in $Z$ is not particularly important.
Moreover, the analysis of unsatisfiable cores is preceded by a progression based shrinking \cite{DBLP:journals/tplp/AlvianoD16}.

The performance of the implemented prototype is compared with \textsc{camus\_mcs-1.0.5} \cite{DBLP:journals/jar/LiffitonS08}, \textsc{lbx} \cite{DBLP:conf/ijcai/MenciaPM15}, \textsc{mcsls} (with algorithms \textsc{els} and \textsc{cld}; \citeNP{DBLP:conf/ijcai/Marques-SilvaHJPB13}) and \textsc{hclasp-1.1.5} \cite{DBLP:conf/aaai/GebserKROSW13}.
\textsc{camus\_mcs}, \textsc{lbx} and \textsc{mcsls} are solvers for the enumeration of minimal correction subsets (more details are provided in Section~\ref{sec:related}).
\textsc{hclasp} is a branch of \textsc{clasp} \cite{DBLP:journals/ai/GebserKS12} introducing domain heuristics;
it can enumerate minimal models if atoms of the form \texttt{\_heuristic(p,false,1)} are introduced for each atom $p$ subject to minimization, and if invoked with the command line parameters \texttt{--heuristic=domain --enum-mode=record}.

The experiments comprise two problems, namely the enumeration of minimal intervention strategies and the enumeration of minimal correction subsets.
For the first problem, instances representing biological signaling networks are considered \cite{DBLP:journals/tplp/KaminskiSSV13};
these instances are translated into the input format of \textsc{circumscriptino} thanks to the tool chain \textsc{lp2normal-2.27}+\textsc{lp2atomic-1.17}+\textsc{lp2sat-1.24} \cite{DBLP:conf/birthday/JanhunenN11}.
Regarding the second problem, instances from the SAT Solver Competitions (MUS Special Track) are tested \cite{DBLP:journals/aim/JarvisaloBRS12}.
The experiments were run on an Intel Xeon 2.4 GHz with 16 GB of memory, and time and memory were limited to 10 minutes and 15 GB, respectively.

\begin{table}[b]
    \caption{Enumeration of minimal intervention strategies: execution time in seconds (\textsc{t.o.} for timeout), memory consumption in MB, and number of reported models.}\label{tab:int}
    \centering
    \begin{tabular}{rrrrrrr}
        \toprule
        & \multicolumn{3}{c}{\textsc{circumscriptino}} & \multicolumn{3}{c}{\textsc{hclasp}}\\
        \cmidrule{2-4}\cmidrule{5-7}
        Instance & time & mem & models & time & mem & models\\
        \cmidrule{1-7}
        \textsc{egfr} & 0.00 & 0 & 21 & 0.00 & 0 & 21\\
        \textsc{egfr multiple} & 0.44 & 36 & 83 & 0.10 & 15 & 83\\
        \textsc{tcr} & 8.33 & 16 & 13\,016 & 2.16 & 9 & 13\,016\\
        \textsc{tbh6b} & \textsc{t.o.} & 95 & 153\,405 & \textsc{t.o.} & 115 & 758\,887\\
        \bottomrule
    \end{tabular}
\end{table}

\begin{figure}
    \figrule
    \begin{tikzpicture}[scale=1]
        \begin{axis}[
            scale only axis,
            width=0.7\textwidth,
            height=0.4\textwidth,
            font=\scriptsize,
            legend style={at={(0.26,0.96)}, anchor=north, align=left},
            legend cell align=left,
            xlabel={Number of solved instances},
            ylabel={Running time (seconds)},
            xmin=0,
            xmax=75,
            ymin=0,
            ymax=600,
            xtick={0,15,30,45,60,75},
            ytick={0,120,240,360,480,600},
            grid=both,
        ]
            \addplot [mark size=2pt, color=purple, mark=o] [unbounded coords=jump] table[col sep=tab, skip first n=1, y index=3] {mcs.cactus.csv};
            \addlegendentry{\textsc{hclasp}}

            \addplot [mark size=2pt, color=blue, mark=diamond] [unbounded coords=jump] table[col sep=tab, skip first n=1, y index=2] {mcs.cactus.csv};
            \addlegendentry{\textsc{camus\_mcs}}

            \addplot [mark size=2pt, color=orange, mark=x] [unbounded coords=jump] table[col sep=tab, skip first n=1, y index=4] {mcs.cactus.csv};
            \addlegendentry{\textsc{lbx}}

            \addplot [mark size=2pt, color=brown, mark=pentagon] [unbounded coords=jump] table[col sep=tab, skip first n=1, y index=5] {mcs.cactus.csv};
            \addlegendentry{\textsc{mcsls-els}}

            \addplot [mark size=2pt, color=purple, mark=square] [unbounded coords=jump] table[col sep=tab, skip first n=1, y index=6] {mcs.cactus.csv};
            \addlegendentry{\textsc{mcsls-cld}}

            \addplot [mark size=2pt, color=black, mark=triangle] [unbounded coords=jump] table[col sep=tab, skip first n=1, y index=1] {mcs.cactus.csv};
            \addlegendentry{\textsc{circumscriptino}}

            \addplot [mark size=2pt, color=green!50!black, mark=star] [unbounded coords=jump] table[col sep=tab, skip first n=1, y index=7] {mcs.cactus.csv};
            \addlegendentry{\textsc{virtual best solver}}
        \end{axis}
    \end{tikzpicture}    
    
    \caption{Enumeration of minimal correction subsets: solved instances within a time budget.}\label{fig:mcs:cactus}
    \figrule
\end{figure}
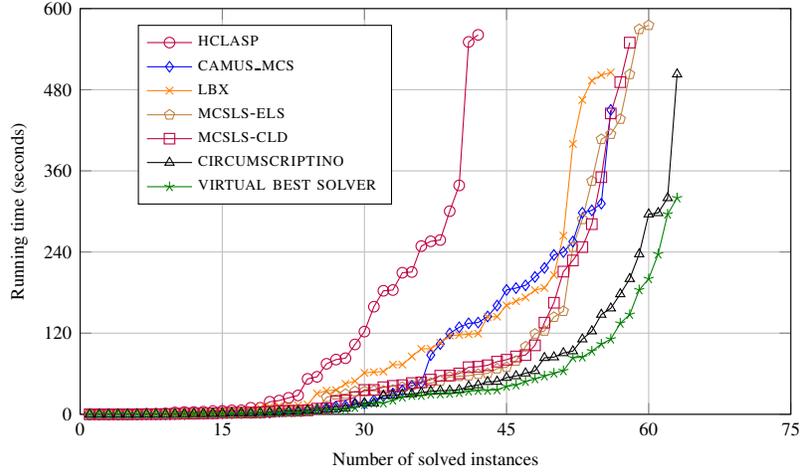

Experimental results on the enumeration of minimal intervention strategies are reported in Table~\ref{tab:int}.
Only 4 instances are available, one of which cannot be solved by the tested solvers;
the other 3 instances, instead, are solved in less than 10 seconds.
For all instances, memory consumption is very low, and \textsc{hclasp} appears to be 4--5 times faster than \textsc{circumscriptino}.
(\textsc{camus\_mcs}, \textsc{lbx} and \textsc{mcsls} are not tested on these instances because their translation is not immediate, and also not the focus of this paper.)

\begin{table}[b]
    \caption{
        Enumeration of minimal correction subsets: solved instances, average execution time in seconds on solved instances, average memory consumption in MB, number of reported models, and average velocity.
        Instances in the first dataset are those solved by at least one solver, while instances in the second dataset are those for which all solvers ran out of time or memory.
    }\label{tab:mcs}
    \begin{tabular}{rrrrrrrrr}
        \toprule
        & \multicolumn{5}{c}{Dataset 1 (63 instances)} & \multicolumn{3}{c}{Dataset 2 (332 instances)} \\
        \cmidrule{2-6}\cmidrule{7-9}
        Solver & sol & time & mem & models & vel & mem & models & vel\\
        \cmidrule{1-9}
        \textsc{circumscriptino} & 63 & 59.3 & 166 & 534\,293 & 1\,875 & 598 & 15\,548\,418 & 77\\
        \textsc{camus\_mcs} & 56 & 78.6 & 411 & 366\,909 & 1\,778 & 1\,442 & 2\,508\,394 & 12\\
        \textsc{hclasp} & 42 & 99.5 & 356 & 351\,785 & 666 & 1\,174 & 13\,988\,100 & 69\\
        \textsc{lbx} & 56 & 98.0 & 146 & 477\,758 & 749 & 636 & 6\,604\,809 & 33\\
        \textsc{mcsls-cld} & 58 & 75.7 & 125 & 489\,240 & 895 & 612 & 7\,538\,687 & 37\\
        \textsc{mcsls-els} & 60 & 91.0 & 125 & 474\,296 & 892 & 594 & 4\,595\,938 & 23\\
        \bottomrule
    \end{tabular}
\end{table}

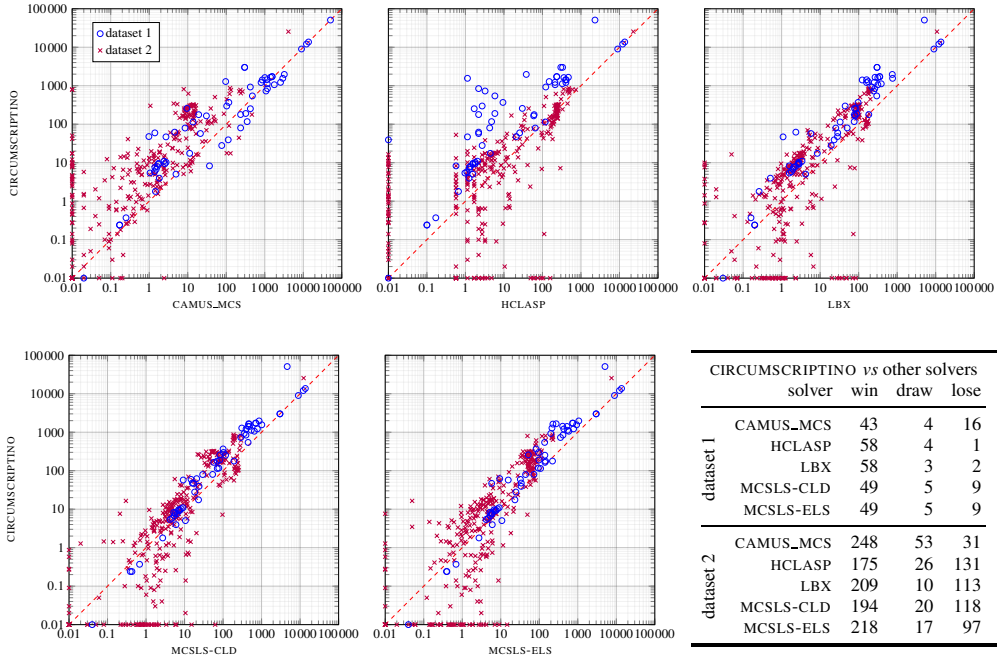
\begin{figure}
    \figrule
    \begin{tikzpicture}[scale=.55]
        \pgfplotsset{minor grid style={lightgray, opacity=0.2}}
        \pgfplotsset{major grid style={black, opacity=0.2}}
        \pgfkeys{/pgf/number format/set thousands separator = {}}
        \begin{loglogaxis}[
            width=0.6\textwidth,
            height=0.6\textwidth,
            legend style={at={(0.2,.95)}, anchor=north, align=left},
            legend cell align=left,
            reverse legend,
            xlabel={\textsc{camus\_mcs}},
            ylabel={\textsc{circumscriptino}},
            xmin=.01,
            xmax=100000,
            ymin=.01,
            ymax=100000,
            xtick={.01,.1,1,10,100,1000,10000,100000},
            ytick={.01,.1,1,10,100,1000,10000,100000},
            log ticks with fixed point,
            tick label style={/pgf/number format/1000 sep=\,},
            grid=both,
        ]
            \draw [dashed, red] (rel axis cs:0,0) -- (rel axis cs:1,1);

            \addplot [mark size=2pt, only marks, color=purple, mark=x] table[col sep=tab, skip first n=3, x index=2, y index=1] {mcs-unsol.csv};
            \addlegendentry{dataset 2}
            \addplot [mark size=2pt, only marks, color=blue, mark=o] table[col sep=tab, skip first n=3, x index=2, y index=1] {mcs-sol.csv};
            \addlegendentry{dataset 1}
        \end{loglogaxis}
    \end{tikzpicture}
    \begin{tikzpicture}[scale=.55]
        \pgfplotsset{minor grid style={lightgray, opacity=0.2}}
        \pgfplotsset{major grid style={black, opacity=0.2}}

        \pgfkeys{/pgf/number format/set thousands separator = {}}
        \begin{loglogaxis}[
            width=0.6\textwidth,
            height=0.6\textwidth,
            xlabel={\textsc{hclasp}},
            xmin=.01,
            xmax=100000,
            ymin=.01,
            ymax=100000,
            xtick={.01,.1,1,10,100,1000,10000,100000},
            ytick={.01,.1,1,10,100,1000,10000,100000},
            log ticks with fixed point,
            tick label style={/pgf/number format/1000 sep=\,},
            grid=both,
            yticklabels={},
        ]
            \draw [dashed, red] (rel axis cs:0,0) -- (rel axis cs:1,1);

            \addplot [mark size=2pt, only marks, color=purple, mark=x] table[col sep=tab, skip first n=3, x index=3, y index=1] {mcs-unsol.csv};
            \addplot [mark size=2pt, only marks, color=blue, mark=o] table[col sep=tab, skip first n=3, x index=3, y index=1] {mcs-sol.csv};
        \end{loglogaxis}
    \end{tikzpicture}
    \begin{tikzpicture}[scale=.55]
        \pgfplotsset{minor grid style={lightgray, opacity=0.2}}
        \pgfplotsset{major grid style={black, opacity=0.2}}

        \pgfkeys{/pgf/number format/set thousands separator = {}}
        \begin{loglogaxis}[
            width=0.6\textwidth,
            height=0.6\textwidth,
            xlabel={\textsc{lbx}},
            xmin=.01,
            xmax=100000,
            ymin=.01,
            ymax=100000,
            xtick={.01,.1,1,10,100,1000,10000,100000},
            ytick={.01,.1,1,10,100,1000,10000,100000},
            log ticks with fixed point,
            tick label style={/pgf/number format/1000 sep=\,},
            grid=both,
            yticklabels={},
        ]
            \draw [dashed, red] (rel axis cs:0,0) -- (rel axis cs:1,1);

            \addplot [mark size=2pt, only marks, color=purple, mark=x] table[col sep=tab, skip first n=3, x index=4, y index=1] {mcs-unsol.csv};
            \addplot [mark size=2pt, only marks, color=blue, mark=o] table[col sep=tab, skip first n=3, x index=4, y index=1] {mcs-sol.csv};
        \end{loglogaxis}
    \end{tikzpicture}

    \bigskip

    \begin{tikzpicture}[scale=.55]
        \pgfplotsset{minor grid style={lightgray, opacity=0.2}}
        \pgfplotsset{major grid style={black, opacity=0.2}}
        \pgfkeys{/pgf/number format/set thousands separator = {}}
        \begin{loglogaxis}[
            width=0.6\textwidth,
            height=0.6\textwidth,
            xlabel={\textsc{mcsls-cld}},
            ylabel={\textsc{circumscriptino}},
            xmin=.01,
            xmax=100000,
            ymin=.01,
            ymax=100000,
            xtick={.01,.1,1,10,100,1000,10000,100000},
            ytick={.01,.1,1,10,100,1000,10000,100000},
            log ticks with fixed point,
            tick label style={/pgf/number format/1000 sep=\,},
            grid=both,
        ]
            \draw [dashed, red] (rel axis cs:0,0) -- (rel axis cs:1,1);

            \addplot [mark size=2pt, only marks, color=purple, mark=x] table[col sep=tab, skip first n=3, x index=5, y index=1] {mcs-unsol.csv};
            \addplot [mark size=2pt, only marks, color=blue, mark=o] table[col sep=tab, skip first n=3, x index=5, y index=1] {mcs-sol.csv};
        \end{loglogaxis}
    \end{tikzpicture}
    \begin{tikzpicture}[scale=.55]
        \pgfplotsset{minor grid style={lightgray, opacity=0.2}}
        \pgfplotsset{major grid style={black, opacity=0.2}}

        \pgfkeys{/pgf/number format/set thousands separator = {}}
        \begin{loglogaxis}[
            width=0.6\textwidth,
            height=0.6\textwidth,
            xlabel={\textsc{mcsls-els}},
            xmin=.01,
            xmax=100000,
            ymin=.01,
            ymax=100000,
            xtick={.01,.1,1,10,100,1000,10000,100000},
            ytick={.01,.1,1,10,100,1000,10000,100000},
            log ticks with fixed point,
            tick label style={/pgf/number format/1000 sep=\,},
            grid=both,
            yticklabels={},
        ]
            \draw [dashed, red] (rel axis cs:0,0) -- (rel axis cs:1,1);

            \addplot [mark size=2pt, only marks, color=purple, mark=x] table[col sep=tab, skip first n=3, x index=6, y index=1] {mcs-unsol.csv};
            \addplot [mark size=2pt, only marks, color=blue, mark=o] table[col sep=tab, skip first n=3, x index=6, y index=1] {mcs-sol.csv};
        \end{loglogaxis}
    \end{tikzpicture}
    \raisebox{6.5em}{
        \begin{minipage}{.3\textwidth}
            \scriptsize
            \tabcolsep=0.06cm
            \begin{tabular}{rrrrr}
                \toprule
                \multicolumn{5}{c}{\textsc{circumscriptino} \emph{vs} other solvers}\\
                & solver & win & draw & lose\\
                \cmidrule{1-5}
                \parbox[t]{2mm}{\multirow{5}{*}{\rotatebox[origin=c]{90}{dataset 1}}}
                & \textsc{camus\_mcs} & 43 & 4 & 16\\
                & \textsc{hclasp} & 58 & 4 & 1\\
                & \textsc{lbx} & 58 & 3 & 2\\
                & \textsc{mcsls-cld} & 49 & 5 & 9\\
                & \textsc{mcsls-els} & 49 & 5 & 9\\
                \cmidrule{1-5}
                \parbox[t]{2mm}{\multirow{5}{*}{\rotatebox[origin=c]{90}{dataset 2}}}
                & \textsc{camus\_mcs} & 248 & 53 & 31\\
                & \textsc{hclasp} & 175 & 26 & 131\\
                & \textsc{lbx} & 209 & 10 & 113\\
                & \textsc{mcsls-cld} & 194 & 20 & 118\\
                & \textsc{mcsls-els} & 218 & 17 & 97\\
                \bottomrule
            \end{tabular}
        \end{minipage}
    }
    
    \caption{Enumeration of minimal correction subsets: instance-by-instance comparison in terms of velocity (velocity 0 normalized to 0.01). Instances in the first dataset are those solved by at least one solver, while instances in the second dataset are those for which all solvers ran out of time or memory.}\label{fig:mcs:vel}
    \figrule
\end{figure}

Concerning the enumeration of minimal correction subsets, 395 instances are tested.
A cactus plot of the execution time of the tested solvers is reported in Figure~\ref{fig:mcs:cactus}.
The cactus plot also shows the performance of the \emph{virtual best solver}, which is almost aligned with \textsc{circumscriptino}:
only a few seconds and no solved instances are gained when \textsc{circumscriptino} is replaced with the best performant solver in each tested instance.
The good result of \textsc{circumscriptino} is also confirmed by Table~\ref{tab:mcs}, reporting aggregated results for instances for which at least one solver enumerated all models (dataset~1), and for the remaining instances (dataset~2).
In particular, results for dataset~1 confirm that \textsc{circumscriptino} solves more instances and has the lowest average running time.
Concerning dataset~2, since all solvers ran out of time or memory, a comparison is obtained in terms of \emph{velocity}, defined as number of reported models per second of computation \cite{DBLP:journals/jar/LiffitonS08}.
This metric shows that \textsc{hclasp} produces models with a velocity close to that of \textsc{circumscriptino}, while other solvers are 2-6 times slower.
(The same metric is also applied to dataset~1, where however the average is influenced by instances solved in less than 1 second.)
An instance-by-instance comparison in terms of velocity is shown in the plots in Figure~\ref{fig:mcs:vel}.
Axes are in logarithmic scale, so velocity 0 is normalized to 0.01.
It can be observed that in all cases the majority of points is above the diagonal, meaning that the velocity of \textsc{circumscriptino} is higher than the velocity of other solvers in the majority of instances.
Finally, concerning memory usage, only two memory outs were recorded for \textsc{camus\_mcs} and \textsc{hclasp} on the same instance.

\section{Related work}\label{sec:related}

Circumscription formalizes a preference relation over models of logic theories.
Such a preference relation is essentially subset minimality.
Within this respect, this work is related to many articles in the literature introducing algorithms for computing minimal models.
Among them is the \textsc{optsat} algorithm \cite{DBLP:conf/jelia/GiunchigliaM06}, which is very similar to the algorithm used by \textsc{hclasp} \cite{DBLP:journals/tplp/KaminskiSSV13} in our experiment.
Indeed, \textsc{optsat} essentially modifies the standard heuristic of a SAT solver by selecting the atoms subject to minimization as first branching literals, so to force the search procedure to return a minimal model.
The difference with the approach implemented by \textsc{hclasp} is that \textsc{optsat} fixes an order for the atoms subject to minimization, while the heuristic of \textsc{hclasp} can select any of these atoms;
indeed, the only constraint that the heuristic of \textsc{hclasp} has to satisfy is that all atoms subject to minimization have to be assigned before branching on any other atom.

The main similarity between \textsc{optsat}, \textsc{hclasp} and \textsc{circumscriptino} is that the search starts by trying to falsifying all atoms subject to minimization.
However, as soon as no model falsifying all these atoms exists, the algorithms behave differently:
\textsc{optsat} and \textsc{hclasp} backtrack and flip some of the objective literals, while \textsc{circumscriptino} alters the problem itself so that models falsifying all but one of the original atoms subject to minimization can be searched.

The modification strategy described above is actually the one implemented by many MaxSAT algorithms based on unsatisfiable core analysis \cite{DBLP:journals/constraints/MorgadoHLPM13}.
In Algorithm~\ref{alg:circ}, the unsatisfiable core analysis is performed according to \textsc{one} \cite{DBLP:conf/ijcai/AlvianoDR15}.
This design choice is motivated by the fact that the fresh variables $y_1,\ldots,y_n$ can be later assumed true in order to trivially satisfy the cardinality constraint and the implications introduced by the unsatisfiable core analysis, a feature not required for computing a single solution for a given MaxSAT instance.
Eventually, the algorithm can be adapted to use different unsatisfiable core analysis techniques, in particular \textsc{pmres} \cite{DBLP:conf/aaai/NarodytskaB14} and \textsc{k} \cite{DBLP:conf/ijcai/AlvianoDR15}.

The algorithm implemented by \textsc{camus\_mcs} \cite{DBLP:journals/jar/LiffitonS08} is specifically conceived to address minimal correction subset enumeration, which is also considered in our experimental analysis.
\textsc{camus\_mcs} adds to the input theory a cardinality constraint in order to compute models of bounded size;
such a bound is iteratively increased until all minimal correction sets are computed.
It turns out that \textsc{camus\_mcs} cannot run smoothly on an incremental solver, and in fact some of the learned clauses have to be eliminated when the bound of the cardinality constraint is changed.
Such a drawback also affects the more general algorithm introduced by \citeANP{DBLP:conf/ecai/FaberVCG16}~\citeyear{DBLP:conf/ecai/FaberVCG16}:
an external solver is used to enumerate cardinality minimal solutions of the input problem, and blocking clauses are then added to the theory so that the external solver can be invoked again for enumerating cardinality minimal solutions of the new theory;
the process is repeated until the theory becomes unsatisfiable.

Concerning \textsc{lbx} \cite{DBLP:conf/ijcai/MenciaPM15}, and the algorithms \textsc{els} and \textsc{cld} implemented by \textsc{mcsls} \cite{DBLP:conf/ijcai/Marques-SilvaHJPB13}, all of them follow an iterative approach, where models are improved by performing several calls to a SAT solver.
Specifically, these algorithms start with any assignment, which is used to partition clauses into satisfied $S$ and unsatisfied $U$.
After that, these algorithms iteratively search for a new model satisfying all clauses in $S$ and at least one clause in $U$.
When no further improvement is possible, the last computed model is an MCS of the input theory, which is reported to the user and blocked by means of a blocking clause.
The three algorithms differ in how they enforce an improvement in the current model:
\textsc{els} checks the satisfiability of the theory $S \cup \{c\}$, for some $c \in U$;
\textsc{cld} checks the satisfiability of the theory $S \cup \{d\}$, where $d$ is the disjunction of all literals occurring in $U$;
\textsc{lbx} checks the satisfiability of the theory $S \cup \{\ell\}$, where $\ell$ is some literal occurring in $U$.
The three algorithms additionally take advantage of a few enhancements, such as disjoint core analysis and backbone literals computation.

Another difference between the mentioned algorithms and the one implemented by \textsc{circumscriptino} is represented by the blocking clauses added to the input theory.
In fact, since \textsc{circumscriptino} addresses model enumeration for circumscribed theories with grouping atoms \cite{DBLP:journals/ai/Lifschitz86}, the assignment of non-grouping atoms (those in set $R$) has to be taken into account in the construction of the blocking clause associated with a computed model.

\citeANP{DBLP:journals/ai/LeeL06}~\citeyear{DBLP:journals/ai/LeeL06} studied theoretical properties of the computational problem associated with circumscription.
In particular, they showed that models of circumscribed theories can be computed by adding \emph{loop formulas} to the input theory, where the notion of loop formula is adapted from answer set programming \cite{DBLP:journals/ai/LinZ04,DBLP:conf/iclp/LeeL03}.
Within this respect, the algorithm implemented by \textsc{circumscriptino} requires less additions to the incremental SAT solver.

Finally, subset minimality is among the preferences natively supported in the language of \textsc{asprin} \cite{DBLP:conf/aaai/BrewkaD0S15,DBLP:conf/lpnmr/BrewkaD0S15,DBLP:conf/iclp/0003SW16}, a versatile framework built on top of \textsc{clingo} \cite{DBLP:journals/corr/GebserKKS17}.
The algorithm implemented by \textsc{asprin} is also iterative, meaning that better and better models are computed until an inconsistency arises.
Differently from other iterative algorithms, however, the improvement on the current model is enforced by means of a \emph{preference program}, which is possibly specified by the user in case of custom preferences.
\textsc{asprin} was not tested in the experiment because its performance is clearly bounded by the underlying ASP solver, and therefore by the heuristic algorithm of \textsc{hclasp} in the setting considered in this paper.

\section{Conclusion}

Many practical problems require a preference relation over admissible solutions.
When such a preference amounts to minimize a set of properties, the problem can be naturally represented in circumscription.
Prominent examples of these problems have been considered in our experiments, namely the enumeration of minimal intervention strategies and the enumeration of minimal correction subsets.
The proposed algorithm takes advantage of unsatisfiable core analysis, and showed to be very efficient in many cases.
As a final remark, we stress here that the algorithm presented in this paper can be nicely combined with the tool \textsc{lp2sat} in order to enumerate models of circumscribed answer set programming theories (under the restriction that answer set existence can be checked in NP).

\bibliographystyle{acmtrans}
\bibliography{bibtex}

\label{lastpage}
\end{document}